\documentclass{hld2025} 


\title[Exploration Behavior of Untrained Policies]{Exploration Behavior of Untrained Policies}
\usepackage{graphicx}
\usepackage{caption}
\usepackage{wrapfig}


\newtheorem{assumption}{Assumption}

\usepackage{float}


\hldauthor{%
\Name{Jacob Adamczyk} \Email{jacob.adamczyk001@umb.edu}\\
\addr {Department of Physics, University of Massachusetts Boston, United States \\ The NSF AI Institute for Artificial Intelligence and Fundamental Interactions} 
}

\begin{document}
\maketitle

\begin{abstract}
Exploration remains a fundamental challenge in reinforcement learning (RL), particularly in environments with sparse or adversarial reward structures. In this work, we study how the architecture of deep neural policies implicitly shapes exploration before training. We theoretically and empirically demonstrate strategies for generating ballistic or diffusive trajectories from untrained policies in a toy model. Using the theory of infinite-width networks and a continuous-time limit, we show that untrained policies return correlated actions and result in non-trivial state-visitation distributions. We discuss the distributions of the corresponding trajectories for a standard architecture, revealing insights into inductive biases for tackling exploration. Our results establish a theoretical and experimental framework for using policy initialization as a design tool to understand exploration behavior in early training.
\end{abstract}

\section{Introduction}
Effective exploration is crucial for reinforcement learning agents operating in high-dimensional or sparse-reward environments. While much focus has been placed on designing explicit exploration bonuses or strategies, we shift attention to a more implicit source of exploration: the parameterization of the policy network. Importantly, studying the effect of policy architecture does not require training of additional networks~\cite{burda2018exploration}, or introduction of exploration-dependent rewards~\cite{taiga2021bonus, lobel2023flipping}. 
Untrained policies are responsible for determining the initial data distribution from which deep RL agents begin training. As such, the properties of deep policies at initialization is important for understanding exploration in the early training regime. We hypothesize that the choice of architecture and the corresponding initialization distribution implicitly determine the entropy and geometry of exploration. As a step toward understanding this phenomenon, we provide some initial theoretical and experimental results to support this hypothesis. We leverage infinite-width and continuous-time limits to understand random policy behavior in a toy model with corresponding simulations. 

\textbf{Contributions:} In this work, we show that (1) fixed policies lead to ballistic trajectories where correlations and smoothness in the network dominate (2) random policy initializations at each step can be used to generate diffusive trajectories with a heavy-tailed steady-state distribution and (3) combining these methods can provide new pathways for controlled exploration in deep RL. 

\textbf{Related Work:} Exploration has been studied from many angles~\cite{ladosz2022exploration}. Our work leverages the infinite-width limit~\cite{neal2012bayesian, jacot2018neural} and analysis of the Fokker-Planck equation which have both proven to be fruitful avenues of research for various ML communities. Neural architecture search (NAS) might attempt to optimize the policy architecture directly~\cite{zoph2016neural}, e.g. for better exploration strategies. Rather than searching in the space of possible architectures, we instead approach the problem through the lens of closed-form solutions, attempting to understand the inductive biases of a fixed architecture and its impact on exploration behavior.

\section{Background}
\subsection{Reinforcement Learning}
Focusing on the effects of untrained policies, we only need the basic and usual definitions: a state space, $\mathcal{S}$, action space $\mathcal{A}$, deterministic transition dynamics (a function mapping state and action to successor-state) $f:\mathcal{S}\times\mathcal{A}\to\mathcal{S}$. Let $\pi_\theta : \mathcal{S} \to \mathcal{A}$ be a policy represented by a feedforward neural network with parameters $\theta\in\mathbb{R}^n$ randomly initialized according to some specified scheme (e.g. Xavier-Glorot). We will not introduce any (intrinsic or exploration-dependent) reward functions and instead focus on the reward-free Markov processes that govern the agent's data collection process. 

Operating in the policy-based RL setting, a network $\pi_\theta$ is trained to maximize returns (e.g. REINFORCE~\cite{williams1992simple}, TRPO~\cite{schulman2015trust}, PPO~\cite{schulman2017proximal}). Value-based algorithms such as $Q$-learning and its offspring~\cite{watkins1992q, mnih2015human, hessel2018rainbow} typically use $\epsilon$-greedy approaches, and the additional non-linear mapping from value to policy space can complicate the analysis, so this connection is left to future work.

\subsection{Smooth Networks Yield Ballistic Trajectories~\label{sec:ballistic}}
For short times, sufficiently smooth neural networks can induce trajectories with a dominating drift term: ``ballistic motion''. Loosely speaking, the structure of a neural net induces similar outputs (actions) for nearby states, and when compounded with smooth dynamics, this results in agents with very smooth (non-diffuse) trajectories. This result is visualized in Figures~\ref{fig:minipage1} and~\ref{fig:minipage2} and is formalized more precisely below:
\begin{assumption}[Transition Dynamics Locality]
\label{assump:dyn}
The environment dynamics are ``local'': there exists a $\delta>0$ such that for all $s\in \mathcal{S}$ and $a\in\mathcal{A}$, the distance between $s$ and any corresponding successor state $s'=f(s,a)$ is upper-bounded $|s'-s|<\delta$.
\end{assumption}

\begin{lemma}[Short-Time Ballistic Behavior of Lipschitz Neural Policies]
Let $\pi_\theta: \mathbb{R}^d \to \mathbb{R}^d$ be a stochastic policy sampled from a neural network prior, such that the realized sample function is $L_\pi$-Lipschitz. Let the deterministic transition dynamics $s_{t+1}=f(s_t,a_t)$ satisfy Assumption 1, and also be Lipschitz continuous with respect to the action space: $|f(s,a)-f(s,a')| \leq L_a |a-a'|$ and state space, $|f(s,a)-f(s',a)|\leq L_s|s-s'|$ for all $s,s'\in\mathcal{S}$ and  $a,a'\in\mathcal{A}$.

Let the constant $c=\pi_\theta(s_0)$ denote the initial action and let $L_s=1$ (the $L_s \neq 1$ regime is dominated by exponential time dependence, and is further discussed in the Appendix). Then for any initial state $s_0\in\mathcal{S}$, the agent's trajectory approximately follows the straight-line path $ct$, up to a bounded error:
\[
    |(s_t - s_0)- ct| \leq \frac{1}{2}\delta L_a L_\pi t^2.
\]
\end{lemma}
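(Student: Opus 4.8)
The plan is to control the accumulated deviation between the realized trajectory and the straight-line reference $s_0 + ct$ by a telescoping argument that reduces everything to a single chain of Lipschitz estimates. Write the error as $e_t = (s_t - s_0) - ct$ with $e_0 = 0$, and note that one step of the recursion gives the exact update $e_{t+1} = e_t + r_t$, where $r_t = \bigl(f(s_t,a_t) - s_t\bigr) - c$ is the per-step residual between the true displacement and the ideal displacement $c$. Telescoping then yields $e_t = \sum_{k=0}^{t-1} r_k$, so it suffices to bound each $|r_k|$ and sum. (Throughout I identify the constant $L_a$ appearing in the bound with the action-Lipschitz constant $L_f$ of $f$ introduced in the hypotheses.)

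The core is a Lipschitz chain that bounds $|r_k|$ linearly in $k$. Decompose the residual as $r_k = \bigl[f(s_k,a_k) - f(s_k,c)\bigr] + \bigl[(f(s_k,c) - s_k) - c\bigr]$. The first bracket is controlled by the action-Lipschitz continuity of $f$, giving $|f(s_k,a_k) - f(s_k,c)| \le L_a|a_k - c|$, while the second bracket is the residual of the frozen-action (``ballistic'') step, which vanishes for the translation-type dynamics singled out by $L_s = 1$, where applying the fixed action $c$ advances the state by exactly $c$. It then remains to bound $|a_k - c|$. Assumption 1 (locality) bounds each step by $|s_{j+1}-s_j| = |f(s_j,a_j) - s_j| < \delta$, so $|s_k - s_0| < k\delta$ by the triangle inequality, and the policy's $L_\pi$-Lipschitz continuity converts this into an action spread, $|a_k - c| = |\pi_\theta(s_k) - \pi_\theta(s_0)| \le L_\pi|s_k - s_0| < L_\pi k\delta$. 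Combining the three links gives $|r_k| < L_a L_\pi k\delta$ (with $r_0 = 0$ since $a_0 = c$), whence $|e_t| \le \sum_{k=0}^{t-1}|r_k| < L_a L_\pi\delta\sum_{k=0}^{t-1} k = \tfrac12 L_a L_\pi\delta\, t(t-1) \le \tfrac12\delta L_a L_\pi t^2$. In the continuous-time reading $\dot s = \pi_\theta(s)$ the same three links yield $|\dot e(t)| \le L_a L_\pi \delta\, t$, and integrating once reproduces the identical $\tfrac12\delta L_a L_\pi t^2$.

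The step I expect to be the main obstacle — and the reason the hypothesis $L_s = 1$ is imposed — is justifying that the state spread $|s_k - s_0|$ grows only linearly, i.e. that errors do not compound through the state argument of $f$. If instead one tracks the deviation from a fixed-action rollout $\bar s_t$ and propagates it through the dynamics, the natural recursion is $|s_{t+1} - \bar s_{t+1}| \le L_s |s_t - \bar s_t| + L_a |a_t - c|$, whose unrolling carries a geometric factor $\sum_k L_s^{\,t-1-k}$; this stays $O(t)$ precisely when $L_s = 1$ (non-expansive state dynamics) and otherwise grows like $L_s^{\,t}$, which is exactly the exponential regime deferred to the appendix. I would therefore state the locality and non-expansiveness estimates carefully as the crux of the argument and verify that, with $L_s = 1$, the geometric factor collapses to unity, leaving the clean quadratic-in-$t$ bound above. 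The remaining manipulations (triangle inequalities and the arithmetic series $\sum_{k=0}^{t-1} k = \tfrac12 t(t-1)$) are routine.
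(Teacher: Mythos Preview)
Your proposal is correct and mirrors the paper's argument closely: both insert the pivot $f(s_t,c)$, apply the action--Lipschitz bound of $f$ and the $L_\pi$--Lipschitz bound of the policy, and invoke Assumption~1 iteratively to obtain $|s_t-s_0|\le\delta t$. The only packaging difference is that the paper sets up the one-step recursion $\epsilon_{t+1}\le L_s\epsilon_t+L_aL_\pi\delta t$ and resolves it via a short induction helper lemma, whereas you telescope and sum the arithmetic series $\sum_{k<t}k=\tfrac12 t(t-1)$ directly; these are equivalent when $L_s=1$.

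One point to tighten: your assertion that the second bracket $(f(s_k,c)-s_k)-c$ ``vanishes for the translation-type dynamics singled out by $L_s=1$'' conflates a Lipschitz bound with a structural assumption. The hypothesis $L_s=1$ does not force $f(s,c)=s+c$. The paper does not set that bracket to zero; instead it compares against the reference $\widetilde s_t=s_0+ct$ (tacitly assuming $\widetilde s_{t+1}=f(\widetilde s_t,c)$) and bounds the corresponding term by $|f(s_t,c)-f(\widetilde s_t,c)|\le L_s|s_t-\widetilde s_t|=L_s\epsilon_t$. That is precisely how the $L_s$ factor enters the recursion and why $L_s\neq 1$ produces the geometric/exponential growth you correctly anticipate in your final paragraph.
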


\begin{remark}
    Although the previous lemma holds for all times, a linear approximation to the trajectory: ${s_t=s_0 + ct +\mathcal{O}(t^2)}$ holds only for short timescales, $t \ll \frac{2c}{\delta L_a L_\pi}$, beyond which, deviations caused by curvature (resulting from non-linearities in the deep neural net) become large enough to dominate the effective mean velocity,~$c$.
\end{remark}

The Lipschitz constant of a neural policy can be computed~\cite{virmaux2018lipschitz, latorre2020lipschitz, bhowmick2021lipbab} or roughly bounded {\textit{a~priori}} to make Lemma~1 practically implementable. Although the assumptions and structure of the above dynamics may be limiting for some environments, the result provides a starting point to develop an intuition for the relationship between architecture, dynamics, and ``depth''\footnote{In the sense of ``deep exploration''~\cite{NIPS2016_8d8818c8}.} of trajectories. We will focus on generalizing these results in future work. 

If a fixed policy network is used at the beginning of training (e.g. to fill up a replay buffer), the agent will not observe diverse trajectories when periodically reset to the initial state, (a common setup in episodic RL). As an extreme alternative, we can instead consider initializing a \textit{new} policy (from a fixed distribution, say) at each timestep and study its effects on the agent's trajectories.

\begin{figure}[H]
    \centering
    \begin{minipage}[t]{0.4\textwidth}
        \centering
        \includegraphics[width=\textwidth]{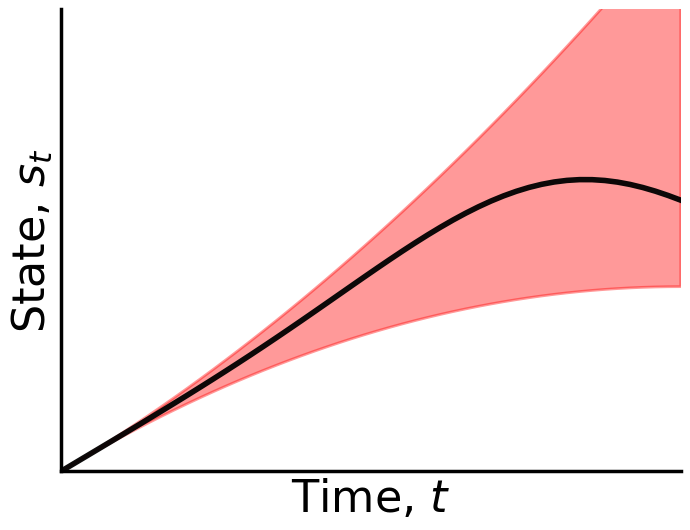}
        \captionof{figure}{Though trajectories can change direction (as observed in the plot on the right), on short timescales, the trajectories can be well-approximated with a dominant linear drift.}
        \label{fig:minipage1}
    \end{minipage}
    \hspace{5em}
    \begin{minipage}[t]{0.3\textwidth}
        \centering
        \includegraphics[width=\textwidth]{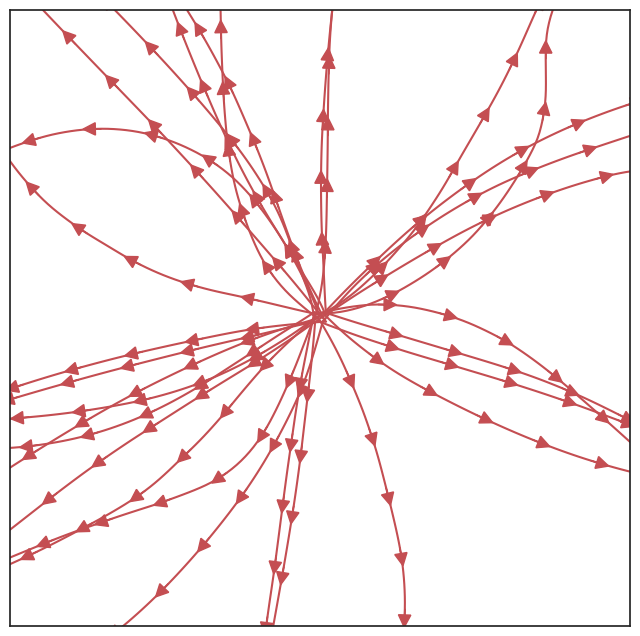}
        \captionof{figure}{Standard MLPs used for deep RL (with two hidden layers, $256$ hidden nodes, ReLU activation) produce ``ballistic'' trajectories.}
        \label{fig:minipage2}
    \end{minipage}
\end{figure}

\subsection{Random Policy Networks as Gaussian Processes}
Rather than a single policy generating an entire trajectory (as is typically the case), we seek to obtain a more diffusive (and hence potentially more exploratory) trajectory distribution. To this end, the control policy $\pi_\theta$ is now re-sampled sequentially, producing a stochastic trajectory ensemble (even with deterministic dynamics and deterministic action outputs from the policies):

\begin{definition}[Trajectory Under a Random Policy Ensemble]
Let $\mathcal{T}_\theta = \{s_0, s_1, \dots, s_T\}$ denote a trajectory generated by:
\[
s_{t+1} = f\left(s_t, \pi_{\theta_t}(s_t)\right), \quad \theta_t \sim \mathcal{P}_\theta
\]
where each $\theta_t$ is independently drawn from the initialization distribution $\mathcal{P}_\theta$, and $\pi_{\theta_t}$ is the corresponding deterministic policy network.
\end{definition}

With this setup, we can leverage the following closed-form expression for the policy in the infinite-width limit (cf. also Eq. 1 of \cite{meronen2021periodic} and surrounding discussion therein for more details):
\begin{theorem}[\citet{neal2012bayesian}, Infinite-Width Limit as Gaussian Process]
Let $\pi_\theta$ be a feedforward neural network with nonlinear activation $\phi(0)=0$ (e.g., ReLU or Tanh), and i.i.d. weights and biases, with zero mean. In the limit as the width of all hidden layers tends to infinity, the distribution over functions $\pi_\theta$ converges to a Gaussian Process:
\[
\pi_\theta \overset{d}{\longrightarrow} \mathcal{GP}(0, K(s, s')),
\]
where $K$ is an architecture-dependent kernel.
\end{theorem}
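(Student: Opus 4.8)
The plan is to reduce the statement about convergence of a distribution over \emph{functions} to a statement about convergence of \emph{finite-dimensional marginals}, since a Gaussian process is characterized entirely by the joint Gaussianity of its evaluations at any finite collection of inputs. Accordingly, I would first fix an arbitrary finite set of inputs $s_1, \dots, s_k \in \mathcal{S}$ and show that the random vector $(\pi_\theta(s_1), \dots, \pi_\theta(s_k))$ converges in distribution to a centered multivariate Gaussian as the width grows. For a single hidden layer of width $H$, each output coordinate has the form
\[
\pi_\theta(s) = b + \sum_{j=1}^{H} v_j \, \phi(w_j \cdot s + a_j),
\]
so the per-unit contribution $\xi_j(s) = v_j\,\phi(w_j\cdot s + a_j)$ is i.i.d. across $j$. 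The output is thus a sum of $H$ i.i.d. terms plus a bias, which is exactly the regime of the central limit theorem (CLT).

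Next I would compute the first two moments and invoke the CLT. The mean vanishes because the outer weights $v_j$ and bias $b$ are zero-mean and independent of the activation argument, giving $\mathbb{E}[\pi_\theta(s)] = 0$; the hypothesis $\phi(0)=0$ keeps the moment recursion clean and fixes the origin. With the standard scaling $\mathrm{Var}(v_j)=\sigma_v^2/H$, the covariance converges to the finite limit $K(s,s') = \sigma_b^2 + \sigma_v^2\,\mathbb{E}_{w,a}\!\left[\phi(w\cdot s + a)\,\phi(w\cdot s' + a)\right]$, which is the architecture-dependent kernel. To upgrade this to joint Gaussianity of the whole vector I would apply the Cram\'er--Wold device: every fixed linear combination $\sum_i c_i\,\pi_\theta(s_i)$ is again a sum of $H$ i.i.d. scalars, so the scalar CLT — after checking a Lyapunov or Lindeberg condition, which holds once $\phi$ has finite-variance images under the initialization measure — yields a Gaussian limit with variance $\sum_{i,i'} c_i c_{i'} K(s_i, s_{i'})$. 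Since this holds for all $c$, the joint limit is the centered multivariate Gaussian with covariance $[K(s_i,s_{i'})]_{i,i'}$.

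Finally, I would note that these finite-dimensional Gaussian limits are consistent under marginalization, since $K$ is a single fixed kernel independent of which inputs are evaluated; by the Kolmogorov extension theorem they are the marginals of a well-defined process, and convergence of all finite-dimensional distributions \emph{is} convergence to $\mathcal{GP}(0,K)$. For the deep case I would proceed by induction on depth: conditioned on the previous layer's pre-activations converging to a GP, the next layer is again a width-$H$ sum of i.i.d. terms and the same argument applies, producing the recursive kernel $K^{(\ell+1)}(s,s') = \sigma_b^2 + \sigma_w^2\,\mathbb{E}[\phi(u)\phi(u')]$ with $(u,u')$ jointly Gaussian under $K^{(\ell)}$. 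The main obstacle is precisely this step: in the deep case the hidden units are only \emph{conditionally} i.i.d. given earlier layers, so one must justify exchanging the sequential (layer-by-layer) width limits with the desired simultaneous limit and control the propagation of non-Gaussian finite-width corrections — the delicate point developed rigorously in the later deep-network literature, and the reason the single-layer CLT does not transfer verbatim. A secondary technical point is verifying the Lindeberg condition uniformly across layers for unbounded activations such as ReLU.
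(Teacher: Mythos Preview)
The paper does not supply its own proof of this theorem: it is stated as a cited result attributed to Neal, with a pointer to the surrounding literature for details, and is used only as background for the subsequent kernel and Fokker--Planck discussion. There is therefore nothing in the paper to compare your argument against. For what it is worth, your sketch is the standard route --- reduce to finite-dimensional marginals, apply the scalar CLT via Cram\'er--Wold to the single-hidden-layer sum of i.i.d.\ unit contributions, identify the limiting covariance as the architecture kernel $K$, invoke Kolmogorov extension, and then induct on depth --- and you correctly flag the genuine subtlety (simultaneous versus sequential width limits in the deep case, and the Lindeberg check for unbounded activations). But within this paper the theorem is treated as an imported fact and is not proved.
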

The GP then produces a structured action covariance when a new policy is sampled at each timestep. Formally, let states $s, s' \in \mathcal{S}\doteq\mathbb{R}^d$ be given. The covariance between actions (w.r.t. draws from the initial parameter distribution) at the two states is given by: 
${\text{Cov}\left[\pi_\theta(s), \pi_\theta(s')\right] = K(s, s') \in \mathbb{R}^{d \times d}}$. Thus, the kernel corresponding to the chosen architecture (and hence the parameter initialization) gives rise to action correlations across the state-space. Below, we show an analytically tractable example of this structure in a policy with one (infinitely wide) hidden layer and ReLU activation, for its simplicity and popularity in the literature.

Note that these properties are complementary to that of the fixed policy, whose architecture is typically Lipschitz, implying the ballistic trajectories discussed in Section~\ref{sec:ballistic}.

\subsection{Case Study: ReLU Network and the Induced State Distribution}

For ReLU networks with Gaussian weight initialization, the infinite-width kernel is given by:
\begin{equation}
K(s,s') =  \frac{\sigma_w^2}{\pi} |s||s'| \left(\sin \theta + (\pi - \theta) \cos \theta\right) + \sigma_b^2,
\end{equation}
where $\cos \theta = \frac{\langle s, s' \rangle}{|s||s'|}$. This kernel is not stationary (i.e. it does not only depend on a radial distance between inputs), and induces a diffusion term that grows with $|s|^2$. Thus, far from the origin, action selection becomes less correlated. Stationary kernels (e.g. Radial Basis Functions~\cite{buhmann2000radial}) depending on $|s-s'|$ on the other hand would result in a constant diffusion coefficient, which may be of interest in some environments.

In the special case of one hidden layer, infinitely wide, with ReLU non-linearities, we take weights and biases drawn from centered Gaussians with variance $\sigma_w, \sigma_b$, respectively. In this case, the policy is mean zero and has diffusion coefficient $K(s,s)=\Sigma(s) = \sigma_b^2 + \frac{\sigma_w^2}{\pi} \|s\|^2 $. For the simplest linear dynamics $s_{t+1} = s_t + \pi(s_t)$, we can model the stochastic policy as a random walk in state space. In the continuous time (equivalently ``small action'') limit, the Fokker-Planck equation can be employed to study the dynamical distribution $p(s,t)$ over state space:
\begin{equation}
\frac{\partial p}{\partial t} = -\mu_0^\top \nabla p + \frac{1}{2} \nabla^2 : (\Sigma(s) p),
\end{equation}
\noindent which in the long-time limit (assuming stationarity) can be written as:
\[
\nabla^2 \left[ \left( \sigma_b^2 + \frac{\sigma_w^2}{\pi} \|s\|^2 \right) p_\infty(s) \right] = 0.
\]

\noindent With radial symmetry $p_\infty(s) = f(r)$, $r = \|s\|$, we obtain the following solution:

\begin{equation}
f(r) \propto\left( \sigma_b^2 + \frac{\sigma_w^2\cdot r^2}{\pi} \right)^{-d/2} \sim{} \frac{1}{r^d}.
\end{equation}

This describes a heavy-tailed stationary distribution (resembling Cauchy or power-law distributions) and is normalizable if $\sigma_b>0$. Note that as $\sigma_b^2 \to 0$, the tails become increasingly heavy, and the agent is more likely to reach distant regions of state space.

\subsection{Experiments}
Unsurprisingly, an agent's exploration behavior depends on the nature of its policy parameterization. To demonstrate our theoretical results, we show that ballistic or diffusive motion can be controlled via policy ``resets''. As illustrated in Figure~\ref{fig:exploration-inline}, placing a barrier in state space with a narrow hallway drastically reduces the likelihood of uniformly exploring state space (i.e., passing through the hallway). Ballistic trajectories, generated by a fixed MLP (red), often do not pass through the hallway: they follow straight paths, and small errors result in failed exploration. In contrast, the trajectories from policy networks that are re-initialized at every step (green) exhibit diffusive,\begin{wrapfigure}{r}{0.48\textwidth}
    \centering
    \includegraphics[width=\linewidth]{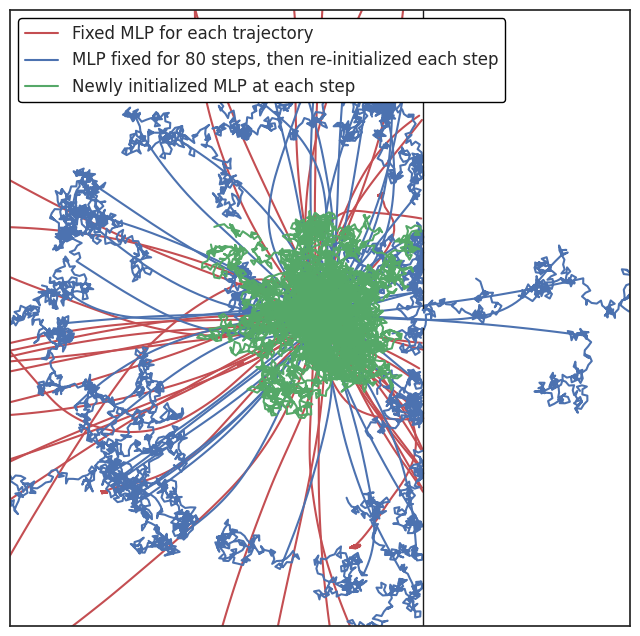}
    \caption{\textbf{Exploration through a narrow hallway.} Ballistic trajectories from a fixed MLP struggle to pass through the barrier; stepwise re-initialization produces overly-diffusive motion, while a hybrid strategy leverages both behaviors for efficient exploration.}
    \label{fig:exploration-inline}
\end{wrapfigure} fat-tailed distributions that will eventually (but slowly) explore across the barrier. A hybrid switching strategy, where a fixed MLP is used for the first $n$ steps before switching to per-step re-initialization, captures both the initial linear drift and later diffusion, enabling more effective exploration in this scenario. Within the linear regime, choosing a value of ${n\ll L_\pi^{-1}}$ (cf. Lemma 1) ensures that trajectories will escape the initial state, while $n > 2\Delta/\delta$ can ensure that agents reach the states of interest (e.g. those a distance $\Delta$ from the origin) before diffusing. Studying this tradeoff with a more quantitative definition of ``exploration'' in more complex environments is the focus of future work. The policy reset can also be done stochastically with increasing probability, based on agent observations, which may be especially relevant in the deep RL setting, perhaps providing additional insight to the ~\cite{NEURIPS2023_75101364}.

\section{Discussion}

In this work, we explored how the architecture and initialization of policy networks can shape the exploration dynamics of reinforcement learning agents. By modeling untrained policies as draws from Gaussian Processes (GPs) in the infinite-width limit, we demonstrated that distinct neural architectures and parameterizations implicitly induce nontrivial priors over agent behavior. For simple dynamics models, these behaviors are analytically tractable, giving insight into the exploration distribution. These priors manifest themselves as specific patterns in trajectory geometry, ranging from ballistic drift to heavy-tailed diffusive exploration.

Our analysis showed that Lipschitz continuity in deterministic policy networks leads to smooth, directional (ballistic) trajectories which may limit early exploration. In contrast, policies sampled at each timestep from an architectural prior induce highly stochastic (but structured) trajectory distributions. We studied these trajectories in the continuous-time limit, employing a Fokker–Planck equation, revealing a connection between a network's kernel and steady-state distribution. We found that ReLU-based policies generate quasi-Cauchy state distributions in free space, encouraging broad exploration. Notably, the magnitude and shape of the induced kernel dictate the correlation between actions at different states, determining whether trajectories are locally consistent or rapidly de-correlating, when using the per-step policy sampling scheme.

These findings suggest a new lens for understanding exploration. Rather than a process to be tuned through learning or incentivized via external bonuses, one can envisage exploration as an architectural and initialization design problem. By tailoring policy architectures to match the topology or reward sparsity of a target environment, one can influence early-stage exploration ``zero-shot''. Future work will study the relationship to resetting parameters during training~\cite{nikishin2022primacy}, which may have the additional benefit of improving exploration ability. This discussion aligns with a broader incentive to align network architectures with environments, an area in deep reinforcement learning that has not yet been explored.

\subsection*{Acknowledgements}
JA acknowledges funding support from the PDT Partners Machine Learning Conference Grant and the NSF through Award No. PHY-2425180.
This work is supported by the National Science Foundation under Cooperative Agreement PHY-2019786 (The NSF AI Institute for Artificial Intelligence and Fundamental Interactions, http://iaifi.org/).

\bibliography{main}
\newpage

\section*{Appendix}
We first provide a useful inequality for obtaining our main result, Lemma 1, first focusing on the case of $L_s=1$:
\begin{lemma}
The sequence of inequalities
\begin{equation}
    \epsilon_{t+1} \leq \epsilon_t + kt
\end{equation}
has iterates that are upper bounded by $\epsilon_t \leq kt^2/2 +\epsilon_0$.\label{lemma:helper}
\end{lemma}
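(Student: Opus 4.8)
The plan is to treat this as a discrete analogue of the elementary ODE $\dot\epsilon(t) = kt$, whose solution is $\frac{1}{2}kt^2 + \epsilon_0$; the claimed bound is precisely this continuous solution, so I expect the discrete version to follow by a one-line induction carrying a small amount of slack. I would prove it by induction on $t$, after flagging the implicit standing assumption $k \geq 0$ (which holds in the intended application, where $k = \delta L_a L_\pi$ is a product of nonnegative constants).

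For the base case $t = 0$ the claimed bound reads $\epsilon_0 \leq \epsilon_0$, which holds with equality. For the inductive step I would assume $\epsilon_t \leq \frac{1}{2}kt^2 + \epsilon_0$ and combine it with the recursion $\epsilon_{t+1} \leq \epsilon_t + kt$ to obtain
\[
\epsilon_{t+1} \leq \frac{1}{2}kt^2 + kt + \epsilon_0 = \frac{1}{2}k(t^2 + 2t) + \epsilon_0 \leq \frac{1}{2}k(t+1)^2 + \epsilon_0,
\]
where the final step uses $t^2 + 2t \leq t^2 + 2t + 1 = (t+1)^2$ together with $k \geq 0$. This closes the induction and yields the stated bound for all $t$.

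An equally clean route, which I would mention as an alternative, is to telescope the recursion directly: writing $\epsilon_t - \epsilon_0 = \sum_{j=0}^{t-1}(\epsilon_{j+1}-\epsilon_j) \leq \sum_{j=0}^{t-1} kj = \frac{1}{2}k\,t(t-1)$ and then bounding $t(t-1) \leq t^2$. Both approaches make transparent that the stated inequality is not tight: there is leftover slack of $\frac{1}{2}kt$ (equivalently, the gap between $t(t-1)$ and $t^2$, or the discarded $+1$ in completing the square), which is harmless and is exactly the discretization overhead relative to the continuous integral $\int_0^t k\,\tau\,d\tau$.

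There is no genuine obstacle here; the result is a standard discrete Grönwall-type estimate. The only point requiring care is the sign assumption $k \geq 0$, since both the dropping of the $+1$ in the inductive step and the arithmetic-series bound rely on it to preserve the inequality direction. I would therefore state that assumption explicitly before running the argument, so that its use in the subsequent application (where it feeds into the $\frac{1}{2}\delta L_a L_\pi t^2$ curvature bound of Lemma~1) is fully justified.
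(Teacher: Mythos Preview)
Your proposal is correct and essentially matches the paper's own proof: both argue by induction on $t$, combining the recursion with the inductive hypothesis and then completing the square to absorb the linear term into $\frac{1}{2}k(t+1)^2$. Your additional remarks (the telescoping alternative and the explicit $k \geq 0$ assumption) are accurate refinements but do not change the approach.
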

\begin{proof}
The proof is straightforward by induction. For the base case, note that $\epsilon_0\leq k(0)^2/2 + \epsilon_0$. From the inductive hypothesis, 
\begin{align*}
\epsilon_{t+1} 
&\leq \epsilon_t + kt \\
&\leq \frac{kt^2}{2} + \epsilon_0 + kt \\
&= \frac{k(t^2 + 2t)}{2} + \epsilon_0 \\
&= \frac{k(t+1)^2 - k}{2} + \epsilon_0 \\
&= \frac{k(t+1)^2}{2} + \epsilon_0 - \frac{k}{2} \\
&\leq \frac{k(t+1)^2}{2} + \epsilon_0
\end{align*}
 which is the form of iterate $(t+1)$, thus completing the proof.
\end{proof}

When $L_s\neq 1$, the recursive form is instead controlled by an exponential behavior, $L_s^t$:
\begin{lemma}
The sequence of inequalities
\begin{equation}
    \epsilon_{t+1} \leq A\epsilon_t + kt
\end{equation}
has iterates that are upper bounded by $\epsilon_t \leq kt\frac{A^t-1}{A-1} +\epsilon_0$.\label{lemma:helper2}
\end{lemma}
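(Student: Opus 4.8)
The plan is to prove this by induction on $t$, in direct parallel with the $A=1$ companion lemma (Lemma~\ref{lemma:helper}), while carefully tracking the geometric factors that the multiplicative constant $A$ introduces. The structural observation that makes everything transparent is that the claimed forcing term hides a geometric series: $\frac{A^t-1}{A-1} = 1 + A + \cdots + A^{t-1} = \sum_{i=0}^{t-1} A^i$. First I would dispatch the base case $t=0$, where both sides equal $\epsilon_0$, and then carry out the inductive step.

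For the inductive step I would substitute the hypothesis $\epsilon_t \leq kt\frac{A^t-1}{A-1} + \epsilon_0$ into the recurrence, giving $\epsilon_{t+1} \leq A\epsilon_t + kt \leq A\bigl(kt\frac{A^t-1}{A-1} + \epsilon_0\bigr) + kt$. The forcing terms then collapse cleanly via the geometric identity, since $kt\frac{A(A^t-1)}{A-1} + kt = kt\frac{A^{t+1}-A + A - 1}{A-1} = kt\frac{A^{t+1}-1}{A-1}$, which is the $t\mapsto t+1$ form of the forcing contribution (in fact with coefficient $t$ rather than the target $t+1$, so there is slack). This algebraic collapse is the routine part of the argument.

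The main obstacle is the homogeneous term: the inductive step sends $\epsilon_0 \mapsto A\epsilon_0$ at each stage, so the manipulation above leaves $\epsilon_{t+1} \leq kt\frac{A^{t+1}-1}{A-1} + A\epsilon_0$, whereas the target is $k(t+1)\frac{A^{t+1}-1}{A-1} + \epsilon_0$. Comparing the two, the induction closes precisely when the leftover forcing slack dominates the homogeneous excess, i.e. when $(A-1)\epsilon_0 \leq k\frac{A^{t+1}-1}{A-1}$; since $\frac{A^{t+1}-1}{A-1} \geq 1$, the sufficient condition is simply $(A-1)\epsilon_0 \leq k$, and in particular $\epsilon_0 = 0$, which is exactly the regime relevant to Lemma~1 where the error $\epsilon_t = |(s_t - s_0) - ct|$ satisfies $\epsilon_0 = 0$ identically. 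I would therefore state this condition explicitly rather than leave the discrepancy between the bare $\epsilon_0$ here and the $A^t\epsilon_0$ of a naive unrolling implicit. As a cross-check, directly unrolling the recurrence yields the honest closed form $\epsilon_t \leq A^t\epsilon_0 + k\sum_{j=0}^{t-1} A^{t-1-j} j$, and bounding $j \leq t$ inside the sum gives $k\sum_{j=0}^{t-1} A^{t-1-j} j \leq kt\sum_{i=0}^{t-1} A^i = kt\frac{A^t-1}{A-1}$, recovering the stated forcing term and confirming that the homogeneous contribution is the only subtlety.
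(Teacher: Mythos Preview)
Your induction with geometric accumulation is exactly the approach the paper indicates in its one-sentence proof (``similar to Lemma~\ref{lemma:helper}, except that a factor of $A$ is accumulated at each step in a geometric series''). You have in fact been more careful than the paper: your observation that the bare $\epsilon_0$ term in the stated bound only survives the induction under the side condition $(A-1)\epsilon_0 \le k$---in particular when $\epsilon_0=0$, as holds in the application to Lemma~1---is correct, and the paper does not address this point.
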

\begin{proof}
    The proof is similar to that of Lemma~\ref{lemma:helper}, except that a factor of $A$ is accumulated at each step in a geometric series.
\end{proof}

\noindent We now prove Lemma~1:

\begin{proof}
    First, denote $\widetilde{s}_t=ct+s_0$, the linear approximation of the trajectory. We examine the iterates of the error, $\epsilon_t\doteq |s_t-\widetilde{s}_t|$ to verify the bound presented in the lemma.

    \begin{align}
        \epsilon_{t+1}&=|s_{t+1}-\widetilde{s}_{t+1}| \\
        &= |f(s_{t}, \pi_\theta(s_t))-f(\widetilde{s}_{t}, c)| \\
        &= |f(s_{t}, \pi_\theta(s_t))-f(s_t,c) + f(s_t,c)-f(\widetilde{s}_{t}, c)|\\
        &\leq |f(s_{t}, \pi_\theta(s_t))-f(s_t,c)|  + |f(s_t,c)-f(\widetilde{s}_{t}, c)|\\
        &\leq L_a|\pi_\theta(s_t)-\pi_\theta(s_0)|  + L_s|s_t-\widetilde{s}_{t}|\\
        &\leq L_a L_\pi|s_t - s_0|  + L_s \epsilon_{t}\\
        &\leq L_a L_\pi\delta t  + L_s \epsilon_{t}
    \end{align}
    Several notes are in order: In the last three lines we (a) made the substitution of $c=\pi_\theta(s_0)$ as an estimate of the mean velocity, (b) used the Lipschitz property of the neural network $\pi_\theta$, and (c) used the locality of dynamics iteratively to bound the distance in state-space. For (c), we have iterated the bound in Assumption~1 for multiple timesteps, making use of the triangle inequality.
    
    Note that for (a), this choice leads naturally to the desired result and is easy to compute (once the first action is drawn, the estimate of the drift is complete). However, it may not lead to the tightest bounds in practice. Empirically, we have found that using a mean of actions along the trajectory can smooth out the estimate for velocity, somewhat improving the bound. Similar steps can be taken to arrive at a linear upper bound, using that e.g. $|\pi_\theta(s_t)-\langle \pi_\theta(s_t) \rangle|<g\varepsilon$. where $g$ is a geometric factor and $\varepsilon$ is the diameter of a ball over which the average $\langle \pi_\theta(s_t) \rangle$ is computed. In any case, the resulting bound is linear.

    Using Lemma~\ref{lemma:helper}, we see that $\epsilon_t\leq \delta L_a L_\pi t^2/2$, neglecting the initial error term $\epsilon_0=0$ (since at $t=0$ we have the correct state $\widetilde{s}_0=s_0$ and there is no approximation). Ensuring the error does not accumulate past the linear prediction ($\delta L_a L_\pi t^2/2<ct$) results in the upper bound on timesteps presented in the main text.

    When considering $L_s\neq 1$, Lemma~\ref{lemma:helper2} applies and there are two distinct cases: (1) If $L_s<1$ the dynamics are contractive, and trajectories remain roughly linear ($\propto t(1-e^{-t/\tau})$) beyond a timescale $\tau=\ln L_s$. Intuitively, this should be expected since a Lipschitz constant less than unity is subsumed by the weaker case of $L_s=1$. (2) If $L_s>1$ the dynamics are explosive and trajectories exponentially separate ($\propto t(e^{t/\tau}-1)$). Although a large global Lipschitz constant $L_s>1$ may be required from a theoretical standpoint, in practical settings, much of state-space may be governed by relatively small local Lipschitz constants $L_s \lesssim 1$, making (sub)trajectories more well-behaved than na\"ively expected by this worst-case analysis.
\end{proof}

\end{document}